\theoremstyle{acmdefinition}
\newtheorem{observation}[theorem]{Observation}
\begin{document}

\title{Discovering Evolutionary Stepping Stones\\ through Behavior Domination}

\author{Elliot Meyerson}
\affiliation{
	\institution{UT Austin; Sentient Technologies, Inc.}
}
\email{ekm@cs.utexas.edu} 

\author{Risto Miikkulainen}
\affiliation{
	\institution{UT Austin; Sentient Technologies, Inc.}
}
\email{risto@cs.utexas.edu} 

\acmConference[GECCO '17]{the Genetic and Evolutionary Computation Conference 2017}{July 15--19, 2017}{Berlin, Germany}
\acmYear{2017}
\copyrightyear{2017}

\begin{abstract}
\emph{Behavior domination} is proposed as a tool for understanding and harnessing the power of evolutionary systems to discover and exploit useful stepping stones. 
Novelty search has shown promise in overcoming deception by collecting diverse stepping stones, and several algorithms have been proposed that combine novelty with a more traditional fitness measure to refocus search and help novelty search scale to more complex domains. 
However, combinations of novelty and fitness do not necessarily preserve the stepping stone discovery that novelty search affords.
In several existing methods, competition between solutions can lead to an unintended loss of diversity.
Behavior domination defines a class of algorithms that avoid this problem, while inheriting theoretical guarantees from multiobjective optimization.
Several existing algorithms are shown to be in this class, and a new algorithm is introduced based on fast non-dominated sorting.
Experimental results show that this algorithm outperforms existing approaches in domains that contain useful stepping stones, and its advantage is sustained with scale.
The conclusion is that behavior domination can help illuminate the complex dynamics of behavior-driven search, and can thus lead to the design of more scalable and robust algorithms.
\end{abstract}

\begin{CCSXML}
<ccs2012>
<concept>
<concept_id>10010147.10010178.10010205</concept_id>
<concept_desc>Computing methodologies~Search methodologies</concept_desc>
<concept_significance>500</concept_significance>
</concept>
<concept>
<concept_id>10010147.10010257.10010293.10011809.10011815</concept_id>
<concept_desc>Computing methodologies~Generative and developmental approaches</concept_desc>
<concept_significance>500</concept_significance>
</concept>
<concept>
<concept_id>10010147.10010257.10010293.10011809.10011812</concept_id>
<concept_desc>Computing methodologies~Genetic algorithms</concept_desc>
<concept_significance>300</concept_significance>
</concept>
</ccs2012>
\end{CCSXML}

\ccsdesc[500]{Computing methodologies~Search methodologies}
\ccsdesc[500]{Computing methodologies~Generative and developmental approaches}
\ccsdesc[100]{Computing methodologies~Genetic algorithms}

\keywords{novelty search, diversity, non-dominated sorting, selection}

\title{Discovering Evolutionary Stepping Stones\\ through Behavior Domination}
\maketitle

\section{Introduction}
\begin{figure}
\includegraphics[width=0.9\columnwidth]{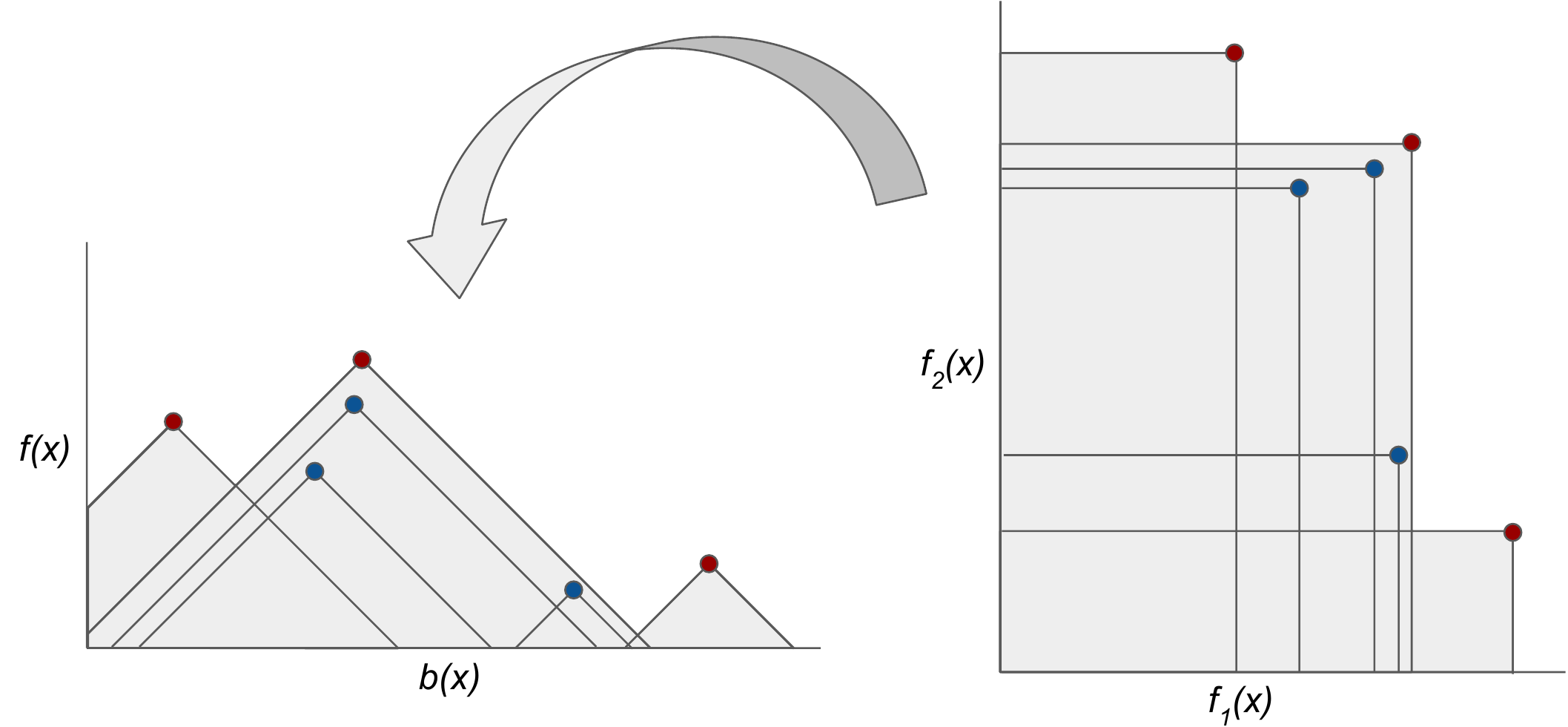}
\vspace{-5pt}
\caption{The non-dominated front of behavior domination can be viewed as a rotation of the Pareto front. Guarantees from multiobjective optimization can then be applied.}
\label{FigRotation}
\end{figure}
The ability to discover and exploit stepping stones is a hallmark of evolutionary systems. 
Evolutionary algorithms driven by a single fitness objective are often victims of \emph{deception}: they converge to small areas of the search space, missing available stepping stones.
Novelty search \cite{lehman08, lehman11a} is an increasingly popular paradigm that overcomes deception by ranking solutions based on how different they are from others.
Novelty is computed in the space of \emph{behaviors}, i.e., vectors containing semantic information about \emph{how} a solution achieves its performance when it is evaluated.
In a collection of solutions with sufficiently diverse behaviors, some solutions will be useful stepping stones.
However, with a large space of possible behaviors, novelty search can become increasingly unfocused, spending most of its resources in regions that will never lead to promising solutions.
Recently, several approaches have been proposed to combine novelty with a more traditional fitness objective \cite{mouret15a, mouret09, gomes15, gomez09, pugh15} to reorient search towards fitness as it explores the behavior space.
These approaches have helped scale novelty search to more complex environments, including an array of control \cite{mouret15, mouret12, bowren16} and content generation \cite{lehman11b, liapis13, preuss14, lehman12, nguyen15, nguyen16, lehman16} domains.

This paper shows that, aside from focusing search overall, the addition of fitness can also be used to focus search on discovering useful stepping stones.
The assumption is that the most likely stepping stones occur at local optima along some dimensions of the behavior space.
Competition in several existing algorithms inhibits the discovery and maintenance of such stepping stones, resulting in ``spooky action at a distance'', when a small search step in one part of the space causes a novel solution to be lost in another part.
Based on the notion of \emph{behavior domination}, a class of algorithms is defined in this paper as a framework for understanding the dynamics of behavior-driven search and developing algorithms that avoid such problems.
Intuitively, behavior domination means that a solution exerts a negative effect on the ranking of every weaker solution, and this effect increases as their difference in fitness increases \emph{and} as the distance between their behaviors decreases.
Behavior domination algorithms include several existing algorithms, and the definition makes it possible to transfer theoretical guarantees from multiobjective optimization; the non-dominated front induced by behavior domination can be viewed (Figure~\ref{FigRotation}) as a rotation of a Pareto front.
Within this framework, a new algorithm is developed that uses fast non-dominated sorting \cite{deb02}.
Experimental results show that this algorithm outperforms existing approaches in domains that contain useful stepping stones, and its advantage is sustained with scale.
The conclusion is that behavior domination can help illuminate the complex dynamics of behavior-driven search, and can thus lead to the design of more scalable and robust algorithms.

\section{Behavior-driven Ranking} \label{SecBehaviorDriven}

Behavior-driven algorithms are a class of evolutionary algorithms that are guided by information about \emph{how} a solution achieves its performance during evaluation.
The core defining component of such an algorithm is the ranking procedure it uses to order solutions for selection or replacement. 
This section reviews background for behavior-driven search, first defining some useful terms, and then describing examples of popular behavior-driven algorithms. 

\subsection{Behavior and Behavior Characterization} \label{SubSecBehavior}

Behavior-driven algorithms use a notion of solution behavior to induce a meaningful distance metric between solutions and to facilitate the drive towards novelty and diversity. 
For example, in a robot control domain, a solution's behavior may be some function of the robot's trajectory \cite{gomez09, gomes13, mouret12}, whereas in an image generation domain, it may be the result of applying some deep features to the image \cite{liapis13, nguyen15, nguyen16, lehman16}. 
The following definitions of behavior, behavior characterization, behavior space, and behavior distance are fairly universal in the literature, though often not explicitly defined.

\begin{definition}
A \emph{behavior} of solution $x$ in environment $E$ is a vector $b_x$ resulting from the evaluation of $x$ in $E$.
\end{definition}

\begin{definition}
A \emph{behavior characterization} $b(x)$ for an environment $E$ is a (possibly stochastic) function mapping any solution $x$ to its behavior $b_x$, given the evaluation of $x$ in $E$. 
\end{definition}

By definition, the behavior characterization can be any function mapping solutions to vectors. In practice, the behavior characterization is usually designed to \emph{align} with a fitness measure or notion of interestingness in the evaluation environment \cite{pugh15}. For example, in a maze navigation task, the final position of a robot aligns more with solving the task than its final orientation. In other words, the behavior characterization is designed to capture a space whose exploration is expected to have practical benefits. 

\begin{definition}
The \emph{behavior space} of a behavior characterization $b$ is the co-domain of $b$.
\end{definition}

The exploration of the behavior space by a search algorithm is facilitated by a function giving the distance between two solutions as a function of their behavior.

\begin{definition}
A \emph{behavior distance} is a metric $d(b(x), b(y))$. 
\end{definition}

In pure novelty search, the behavior of a solution is the only information returned from evaluation that is used in the ranking system. This is in contrast to traditional evolutionary algorithms, which use only a single scalar fitness value $f_x$ computed from a scalar fitness function $f(x)$. In general, a behavior-driven algorithm can take advantage of both behavior and fitness when ranking solutions.

\subsection{Existing Behavior-driven Algorithms} \label{SubSecExistingBDAs}

The following are some of the most popular schemes for behavior-driven algorithms. As extensions to the pure novelty search paradigm, several recent algorithms use both behavior and fitness information in ranking, trying to navigate the trade-off between the pressures towards novelty and diversity, and the pressure to maximize. Although more exist that are not covered here, these below should give a sense of the behavior-driven algorithm design space. (See \cite{mouret12, pugh15,gomes15} for previous reviews of these algorithms.)

\subsubsection{Novelty search (NS) \cite{lehman08, lehman11a}}
Each solution is ranked based on a single \emph{novelty} function $n$, giving the average distance of its behavior to the $k$ nearest behaviors of other solutions in the population and an archive of past solutions accumulated throughout search. More specifically,
$$ n(x) = \frac{1}{k}\sum_{i = 1}^{k} d(b(x), b(y_i))$$
where $y_i$ is the $i^{th}$ nearest neighbor of $x$ in the behavior space. 
The prevalent method of building the archive, and the method used in this paper, is to add each solution to the archive with a fixed probability $p_{add}$ \cite{lehman10b, gomes15}, in which case the archive represents a sampling from the distribution of areas visited so far.
Novelty search captures the idea that more complex and interesting solutions lie away from the visited areas of the behavior space.

\subsubsection{Linear scalarization of novelty and fitness (LSNF) \cite{cuccu11,gomes15}}
An intuitive method of combining novelty and fitness is to rank a solution based on linear scalarization of its fitness and novelty:
$$ \text{score}(x) = (1 - p) \cdot \frac{f(x) - f_{min}}{f_{max} - f_{min}} + p \cdot \frac{n(x) - n_{min}}{n_{max} - n_{min}} n(x). $$
The fitness and novelty scores here are normalized to compensate for differences in scale at every iteration. $f_{min}$, $f_{max}$, $n_{min}$, and $n_{max}$ are the minimum and maximum fitness and novelty scores in the current population. The parameter $p$ controls the trade-off of fitness vs. novelty. LSNF with $p = 0.5$ has been shown to be robust across domains \cite{gomes15}, and that is the version considered here.

\subsubsection{NSGA-II with novelty and fitness objectives (NSGA-NF) \cite{mouret09, mouret12}}
Another approach is to use novelty and fitness as two objectives within NSGA-II \cite{deb02}, the popular multiobjective framework. Often the novelty score in this approach is \emph{behavioral diversity}, which is a special case of novelty, where $k$ is the population size and there is no archive.
This approach has been shown to improve performance on many tasks, especially those in evolutionary robotics, where some constant diversity is useful to avoid local optima. 

\subsubsection{Novelty search with local competition (NSLC) \cite{lehman11b, pugh15}}
Novelty search with local competition also uses an NSGA-II ranking system, but instead of using a raw fitness objective alongside the novelty objective, it uses a relative fitness score: a solution's rank in fitness among its $k$ nearest neighbors. This enables the suitable exploration of diverse niches in the behavior space with different orders of magnitude of fitness. Lower fit niches are not outpaced and forgotten by having too much of the search's resources comitted to the globally most fit regions. NSLC has yielded particularly promising results in content generation domains, such as generating virtual creatures and images \cite{lehman11b, nguyen15}. 

\subsubsection{MAP-elites \cite{mouret15, mouret15a}}
In MAP-elites, the behavior space is broken up into a set of bins, such that each behavior is mapped to a bin. For each bin, the solution with highest fitness whose behavior falls into that bin is kept. The population at any point thus consists of the most fit (elite) solution from each bin for which a behavior has been found. Because MAP-elites keeps an elite from all visited bins in the behavior space, at any point the population displays a map of the levels of fitness achievable throughout the space. So, along with being a method for generating high-quality diverse solutions, MAP-elites is a useful tool for visualization in understanding how the behavior space and fitness landscape relate.

\subsubsection{Fitness-based search}
It is worth including fitness-based search, the standard approach to evolutionary search, as the trivial example. In fitness-based search, solutions are ranked based on a single fitness value. Any additionally available behavior information is ignored.\\

\noindent The proliferation of recently introduced behavior-driven methods gives a strong indication that novelty alone is not generally sufficient for tackling complex domains.
The methods reviewed above each have intriguing definitions that suggest they would be a good option for particular kinds of problems.
However, unforeseen dynamics can emerge from the interaction between novelty and fitness, which can be difficult to disentangle.
The next section sheds some light on these issues, resulting in the characterization of these existing algorithms, and the development of a new approach.

\section{Behavior Domination Algorithms} \label{SecBDAs}
The goal is to maintain the power of novelty search to discover stepping stones, while adding a fitness drive to focus search.
Novelty search has demonstrated that a sufficiently diverse collection of solutions most likely contains useful stepping stones for solving the problem at hand.
When adding fitness to focus search, the presumption is that the most useful stepping stones will be local optima along some dimensions of the behavior space.
As pure fitness-based search maintains the most fit solutions, and pure novelty search maintains the most novel solutions, a method that combines the two should maintain the most promising set of stepping stones discovered so far, and the quality of this set should improve over time.
Section~\ref{SubSecSpooky} discusses the presence of ``spooky action at a distance'' in several existing algorithms, which inhibits their ability to preserve useful stepping stones.
Section~\ref{SubSecBDF} presents a formalization of behavior domination, which defines a sub-class of behavior-driven algorithms that can avoid this pitfall and guarantee monotonic improvement of collected stepping stones.
Section~\ref{SubSecExistingBDMAs} shows that several existing behavior-driven algorithms are in this sub-class.
Section~\ref{SecNewBDMA} uses behavior domination to develop a new algorithm based on fast non-dominated sorting. 

\subsection{``Spooky Action at a Distance'' for Behavior-driven Search} \label{SubSecSpooky}

When novelty and fitness are combined, the interaction between these two drives can have unintended consequences. 
The stepping stone discovery ability of novelty search may not necessarily be preserved.
For example, if a small change in behavior of one solution has a fatal effect on a distant isolated solution on the other edge of the explored behavior space, then a valuable stepping stone may be lost.
The algorithm has taken one small step forward, but one large step back.
This unsettling effect is an instance of ``spooky action at a distance'' for behavior-driven search.
More specifically, spooky action at a distance occurs when a ranking decision based on a \emph{local} increase in novelty results in a \emph{global} decrease of novelty.
Here, global novelty is defined by two measures: GNP, the maximum behavior distance between any pair of solutions in the population; and GNT, the total behavior distance between all pairs of solutions.

It turns out several existing behavior-driven algorithms support spooky action at a distance. The following example is for a one-dimensional behavior space. Consider a population $P = \{x_0, x_1, x_2, x_3\}$, and an empty archive, where $b(x_0) = b_o$, $f(x_0) = f_o$, $b(x_1) = b_o + 10$, $f(x_1) = f_o + 11$, $b(x_2) = b_o + 11$, $f(x_2) = f_o + 10$, $b(x_3) = b_o + 21$, and $f(x_3) = f_o$. Now, consider an identical setup but with $P' = \{x_0, x_1, x_2, x_4\}$, where $b(x_4) = b_o + 22$, and $f(x_4) = f_o$ (Figure~\ref{FigSpooky}).
\begin{figure}
\includegraphics[width=0.95\columnwidth]{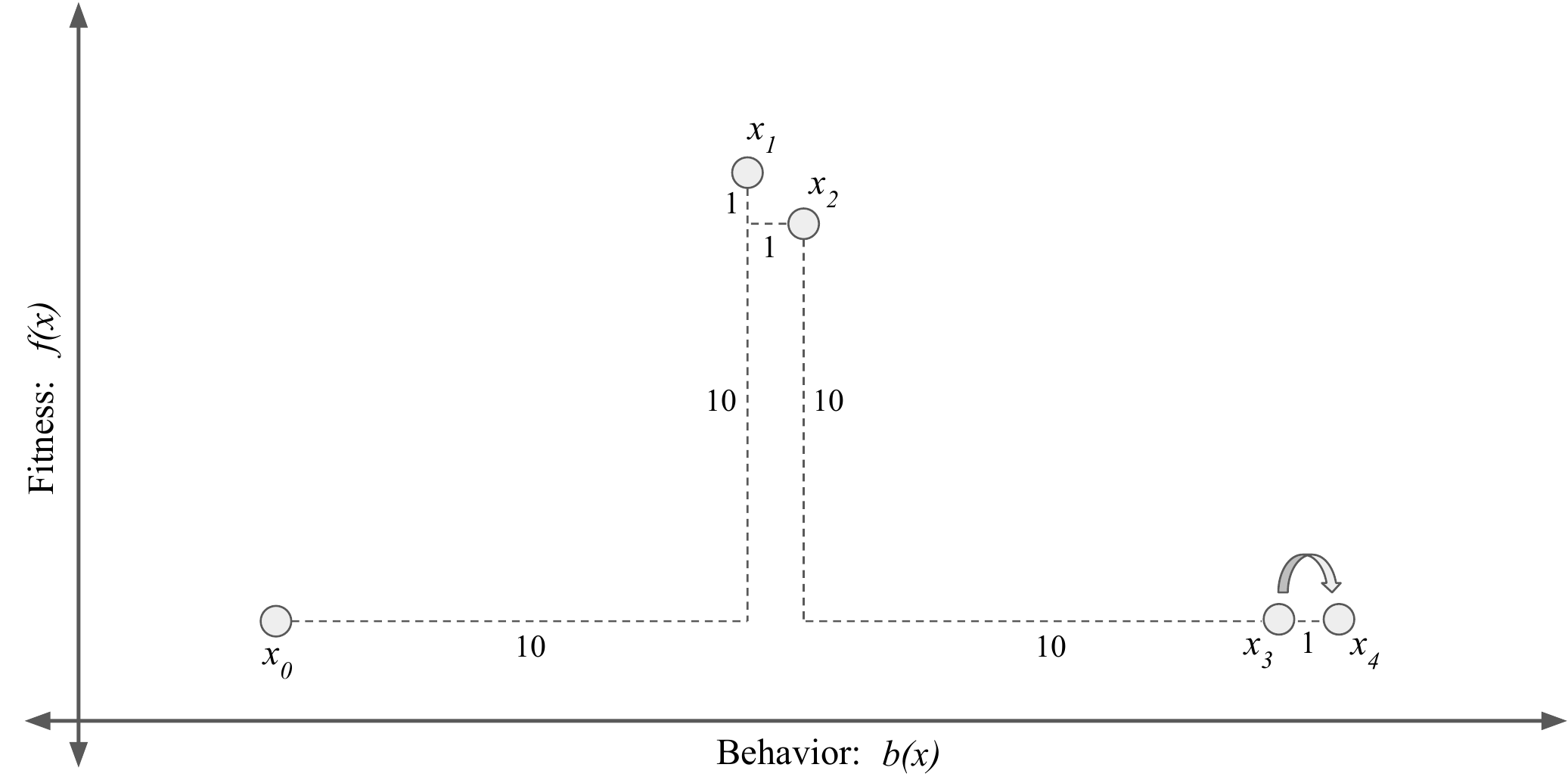}
\caption{\emph{(spooky action at a distance)} Consider populations $P = \{x_0, x_1, x_2, x_3\}$ and $P' = \{x_0, x_1, x_2, x_4\}$, in which one solution must be selected for deletion. Suppose $k = 2$, and the archive is empty. With population $P$, LSNF, NSGA-NF, and NSLC all delete $x_2$. However, with population $P'$, they all delete $x_0$. The small \emph{local} increase in novelty from $x_3$ to $x_4$ thus causes a \emph{global} decrease in novelty (Section~\ref{SubSecSpooky}).}
\label{FigSpooky}
\end{figure}
Suppose an algorithm $A$ must delete one solution, and $A$ deletes $x_2$ with population $P$, but $A$ deletes $x_0$ with population $P'$. This change must be caused by the move of $x_3$ to $x_4$. $P$ with $x_2$ deleted has global novelty $\mbox{GNP}(P) = 21$ and $\mbox{GNT}(P) = 41$. However, $P'$ with $x_0$ deleted has global novelty $\mbox{GNP}(P') = 12$ and $\mbox{GNT}(P') = 24$. Thus, $A$ demonstrates spooky action at a distance.

Suppose $k = 2$. Then given $P$, $n(x_0) = 21/2$, $n(x_1) = 11/2$, $n(x_2) = 11/2$, and $n(x_3) = 21/2$. Given $P'$, $n(x_0) = 21/2$, $n(x_1) = 11/2$, $n(x_2) = 12/2$, and $n(x_4) = 23/2$. The next three observations show spooky action at a distance for LSNF, NSGA-NF, and NSLC.

\begin{observation}[Spookiness of LSNF] \label{ObsSpookyLSNF}
With $P$, $\mbox{score}(x_0) = 0 + 1$, $\mbox{score}(x_1) = 1 + 0$, $\mbox{score}(x_2) = 10/11 + 0$, and $\mbox{score}(x_3) = 0 + 1$ $\implies$ $x_2$ is deleted. With $P'$, $\mbox{score}(x_0) = 0 + 10/12$, $\mbox{score}(x_1) = 1 + 0$, $\mbox{score}(x_2) = 10/11 + 1/12$, and $\mbox{score}(x_4) = 0 + 1$ $\implies$ $x_0$ is deleted.
\end{observation}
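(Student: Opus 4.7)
The plan is a direct verification of the two deletion claims by plugging numbers into the LSNF formula. Since $p = 0.5$, the constant $1/2$ factor multiplying both terms does not affect the ordering, so it suffices to compare the sum of normalized fitness and normalized novelty for each solution.

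For population $P$, first record the normalization constants: $f_{min} = f_o$, $f_{max} = f_o + 11$ (from $x_1$), giving fitness denominator $11$; and from the novelty values supplied above the observation, $n_{min} = 11/2$ and $n_{max} = 21/2$, giving novelty denominator $5$. Substituting yields exactly the four scores stated: $1, 1, 10/11, 1$ for $x_0, x_1, x_2, x_3$ respectively. Since $10/11$ is strictly less than $1$, $x_2$ is the unique minimum and is deleted.

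For $P'$, the fitness values of $x_0, x_1, x_2$ are unchanged and $f(x_4) = f_o$, so the fitness range stays at $11$. The novelty landscape, however, shifts: replacing $x_3$ (at distance $10$ from $x_2$) by $x_4$ (at distance $11$) bumps $n(x_2)$ from $11/2$ to $12/2$ and $n(x_4) = 23/2 > n(x_3) = 21/2$. Consequently $n_{max}$ increases to $23/2$, widening the novelty denominator from $5$ to $6$. Under this rescaling, $x_0$'s normalized novelty collapses from $1$ to $10/12$, while $x_2$'s normalized novelty rises slightly from $0$ to $1/12$. Substituting produces the scores in the statement, and the comparison $10/12 < 10/11 + 1/12 < 1$ shows that $x_0$ is now the unique minimum and is deleted.

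There is no real obstacle; the content is arithmetic. The one point worth underlining is where the ``spookiness'' formally enters: the perturbation $x_3 \to x_4$ raises $n_{max}$, and because LSNF shares a single novelty denominator across the whole population, this local enlargement disproportionately demotes the far-away solution $x_0$ even though $x_0$'s raw novelty is unchanged. That is precisely the mechanism by which a local improvement in novelty triggers a globally harmful deletion.
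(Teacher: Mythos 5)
Your verification is correct and follows exactly the paper's argument: the observation is itself the computation, and you have correctly reconstructed the normalization constants (fitness range $11$ in both cases; novelty range $5$ for $P$ and $6$ for $P'$) that produce the stated scores and the deletions of $x_2$ and $x_0$ respectively. Your closing remark about $n_{max}$ shifting the shared novelty denominator is a faithful articulation of the mechanism the paper calls ``spooky action at a distance,'' not a deviation from it.
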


\begin{observation}[Spookiness of NSGA-NF] \label{ObsSpookyNSGANF}
With $P$, $x_1$ dominates $x_2$, while all other solutions are non-dominated $\implies$ $x_2$ is deleted. With $P'$, $x_2$ is no longer dominated, but $x_4$ now dominates $x_0$ $\implies$ $x_0$ is deleted.
\end{observation}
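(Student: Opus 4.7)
The plan is to verify the two claimed rank assignments by direct case analysis on all pairs of solutions under the standard NSGA-II dominance relation, treating fitness and novelty as two objectives to be maximized. The novelty values for both $P$ and $P'$ are already computed in the paragraph preceding the observation, so I can simply read them off.

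For $P$, I would check each of the $\binom{4}{2} = 6$ pairs. The only domination is $x_1$ over $x_2$: they share novelty $11/2$, but $f(x_1) = f_o + 11 > f_o + 10 = f(x_2)$. The remaining five pairs each either trade fitness for novelty in opposite directions, or are tied on both objectives (as with $x_0$ and $x_3$), so none of them induces a domination. Since $x_2$ is then the unique dominated solution, NSGA-II places it alone on the worst non-dominated front, and it is deleted.

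For $P'$, I would explain how moving $x_3$ by one unit to $x_4$ simultaneously destroys the old domination and creates a new one. On one hand, the distance from $x_2$ to its second-nearest neighbor grows from $10$ to $11$, so $n(x_2)$ rises from $11/2$ to $12/2$, breaking the tie in novelty with $x_1$ and eliminating the relation $x_1$ dominates $x_2$. On the other hand, $x_4$ itself now has novelty $23/2 > 21/2 = n(x_0)$ while $f(x_4) = f_o = f(x_0)$, so $x_4$ dominates $x_0$. A pairwise check of the remaining four pairs shows each is incomparable, making $x_0$ the unique dominated solution, which is deleted.

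The computations are routine; the only subtle point, and the conceptual content of the observation, will be making the reader see that a single one-unit perturbation of $x_3$ is simultaneously responsible for two changes in the dominance graph on opposite sides of the explored behavior space — it frees $x_2$ from being dominated near the high-fitness region, and it causes the removal of $x_0$ at the far low-fitness edge. This is precisely the spooky action at a distance the example is designed to exhibit.
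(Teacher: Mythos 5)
Your proposal is correct and follows essentially the same route as the paper: the paper's observation simply states the domination relations ($x_1 \succ x_2$ in $P$; $x_4 \succ x_0$ and no other dominations in $P'$) using the novelty values computed in the preceding paragraph, and you verify exactly those relations by the same pairwise check. The only thing you add is the explicit enumeration of the incomparable pairs, which the paper leaves implicit.
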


\begin{observation}[Spookiness of NSLC] \label{ObsSpookyNSLC}
With $P$, the local competition scores of $x_0, x_1, x_2, x_3$ are $0, 2, 1, 0$, resp. So, $x_1$ dominates $x_2$, while all other solutions are non-dominated $\implies$ $x_2$ is deleted.  With $P'$, the local competition scores of $x_0, x_1, x_2, x_4$ are again $0, 2, 1, 0$, resp. So, as in Observation~\ref{ObsSpookyNSGANF}, $x_2$ is no longer dominated, but $x_4$ now dominates $x_0$ $\implies$ $x_0$ is deleted.
\end{observation}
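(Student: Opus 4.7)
The plan is to pattern the argument on the preceding observations for LSNF and NSGA-NF: reuse the already established novelty values $n(x_i)$ for the two populations, compute the local competition scores, pair them into NSGA-II objective vectors, and read off the Pareto structure to identify the deleted solution. Once $x_2$ is shown to be removed from $P$ and $x_0$ from $P'$, the global novelty inequalities $\mbox{GNP}(P)=21>12=\mbox{GNP}(P')$ and $\mbox{GNT}(P)=41>24=\mbox{GNT}(P')$ from the setup of Section~\ref{SubSecSpooky} immediately give spooky action at a distance.

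First I would compute the local competition score of each solution, defined as its rank in fitness among its $k=2$ nearest behavioral neighbors. In the one-dimensional behavior configuration of $P$, the two nearest neighbors of $x_0,x_1,x_2,x_3$ can be read off directly from the spacings $10,1,10$ between consecutive behaviors, and comparing fitnesses with those neighbors yields scores $0,2,1,0$ respectively. For $P'$ the same tabulation applies: although $x_2$'s second-nearest neighbor is now tied between $x_0$ and $x_4$, both candidates have fitness $f_o$, so the count of dominated neighbors is unchanged and the scores are again $0,2,1,0$. I would present this as a short table rather than prose to make the invariance under $x_3\mapsto x_4$ transparent.

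Next I would assemble the NSGA-II objective pairs $(n,\ell)$, where $\ell$ denotes local competition, using the novelty values already computed just before Observation~\ref{ObsSpookyLSNF}. For $P$ the pairs are $(21/2,0),(11/2,2),(11/2,1),(21/2,0)$, and the only dominance relation is $x_1$ weakly dominating $x_2$ (equal novelty, strictly higher competition), placing $x_2$ alone in the second non-dominated front and hence marking it for deletion. For $P'$ the pairs become $(21/2,0),(11/2,2),(12/2,1),(23/2,0)$, in which $x_2$ is freed from domination (its novelty strictly exceeds $x_1$'s) but $x_4$ now weakly dominates $x_0$ (strictly larger novelty, equal competition), putting $x_0$ alone in the second front. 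This is exactly the reuse of the NSGA-NF argument from Observation~\ref{ObsSpookyNSGANF} promised in the statement.

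The only subtle step is the tie in $x_2$'s nearest-neighbor set under $P'$, since NSLC's ranking could in principle be sensitive to which neighbor is chosen. I expect this to be the main (and only) obstacle, and I would dispatch it by observing that both tie-breaking choices give the same local competition count because $f(x_0)=f(x_4)=f_o$, and by observing that the novelty value $n(x_2)=12/2$ is likewise unaffected since the tied distances are equal. With this caveat handled, the dominance analysis is unambiguous and the observation follows.
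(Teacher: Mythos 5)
Your proposal is correct and follows essentially the same route as the paper: compute the local competition scores ($0,2,1,0$ in both populations), pair them with the already-derived novelty values as NSGA-II objectives, and read off that only $x_2$ is dominated under $P$ while only $x_0$ is dominated under $P'$. The extra care you take with the tie in $x_2$'s nearest-neighbor set under $P'$ is a detail the paper leaves implicit, and your resolution (both tied neighbors have fitness $f_o$ and equal distance, so neither the competition score nor the novelty value is affected) is exactly right.
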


With problems such as ``spooky action at a distance'' in mind, the next section introduces a notion of behavior domination from which algorithms can be developed that avoid these issues.

\subsection{Ranking by Behavior Domination} \label{SubSecBDF}

A practical unifying framework for behavior-driven methods should capture both the pure novelty maximization and pure fitness maximization extremes, as well as a trade-off space, that potentially captures some of the existing approaches and suggests new ones. Many components of existing ranking mechanisms (Section~\ref{SubSecExistingBDAs}) can be represented in terms of pair-wise relationships between solutions, based on their behaviors and fitnesses. These pairwise interactions capture the positive or negative effects solutions have on each other during ranking when they are competing for a spot in the population. Focusing on pairwise effects also helps avoid unintended global effects, such as that discussed in Section~\ref{SubSecSpooky}.

To focus search on maintaining the most efficient set of stepping stones, behavior domination aims to formalize the idea that a solution should dominate solutions with similar behaviors and lower fitnesses. In particular, each solution exerts a domination effect over each weaker solution. Intuitively, the domination effect should increase (decrease) as the difference between their fitnesses increases (decreases), and increase (decrease) as the distance between their behaviors decreases (increases). The following definition of domination effect captures these requirements.

\begin{definition}
The \emph{domination effect} of $x$ on $y$ is a function $$e(x, y) = f(x) - f(y) - d(b(x), b(y))$$ where $f$ is a fitness function, $b$ is a behavior characterization, and $d$ is a behavior distance.
\end{definition}

The score produced by the domination effect function can be used in various ways in a ranking system. Two common methods of combining pairwise scores are (1) ranking by aggregation, and (2) ranking by domination. In ranking by aggregation, solutions are ranked by a single score based on a sum of  pairwise scores, e.g., the novelty score is a normalized sum of distances between the behaviors of pairs of solutions. In ranking by domination, solutions are ranked in a partial order, by a boolean pairwise relation of whether they dominate one another. To enable ranking by domination, the following definition provides such a pairwise operator, based on the domination effect function defined above.

\begin{definition}
\label{DefDomination}
If $e(x, y) \geq 0$, then $x \succeq y$, that is, $x$ \emph{dominates} $y$.
\end{definition}

It turns out that for any specification of effective domination, i.e., any choice of $f$, $b$, and $d$, this definition of domination defines a partial order over solutions.

\begin{theorem}
$y \succeq x$ induces a partial order over solutions for any choice of $f$, $b$, and $d$.
\end{theorem}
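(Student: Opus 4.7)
The plan is to verify the three defining properties of a partial order---reflexivity, antisymmetry, and transitivity---each of which should follow almost directly from the axioms that $d$ is a metric.

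For reflexivity I would simply observe that $d(b(x), b(x)) = 0$ because $d$ is a metric, so $e(x,x) = 0 \geq 0$ and hence $x \succeq x$. For transitivity, starting from $x \succeq y$ and $y \succeq z$, I would add the defining inequalities $f(x) - f(y) \geq d(b(x), b(y))$ and $f(y) - f(z) \geq d(b(y), b(z))$; the $f(y)$ contributions telescope on the left, and the triangle inequality bounds the right by $d(b(x), b(z))$, yielding $f(x) - f(z) \geq d(b(x), b(z))$, i.e., $x \succeq z$. These two steps are short and follow mechanically from the metric axioms.

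The only genuinely delicate step is antisymmetry. Adding the two inequalities coming from $x \succeq y$ and $y \succeq x$ collapses the fitness terms and leaves $0 \geq 2\,d(b(x), b(y))$, so non-negativity plus identity-of-indiscernibles forces $b(x) = b(y)$, after which the original inequalities give $f(x) = f(y)$. The subtlety, and the main obstacle, is that two distinct solutions with identical behavior and fitness then dominate each other without being literally equal, so on the raw solution set $\succeq$ is only a preorder. The cleanest plan is to state and prove the theorem on the quotient that identifies solutions with matching $(b(x), f(x))$, on which $\succeq$ becomes a genuine partial order. This is the interpretation implicit in Section~\ref{SubSecBDF}, since all ranking information enters only through $b$ and $f$, and I would flag this equivalence explicitly rather than try to force antisymmetry on raw solutions.
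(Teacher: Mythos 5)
Your proof is correct and follows essentially the same route as the paper's: the paper spells out only transitivity, using exactly your telescoping-plus-triangle-inequality argument, and dismisses reflexivity and antisymmetry as ``similarly straightforward.'' Your careful treatment of antisymmetry actually goes beyond the paper---the observation that two distinct solutions with identical behavior and fitness mutually dominate, so that $\succeq$ is strictly only a preorder unless one passes to the quotient identifying solutions with equal $(b(x), f(x))$, is a real subtlety the paper glosses over, and you are right to flag it.
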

\begin{proof}
Transitivity: Suppose $x \succeq y$ and $y \succeq z$. Then, $0 \leq e(x, y) + e(y, z) = (f(x) - f(y) - d(x, y)) + (f(y) - f(z) - d(y, z)) = f(x) - f(z) - (d(x, y) + d(y, z)) \leq f(z) - f(x) - d(x,z) = e(x, z) \implies x \succeq z$. Reflexivity and antisymmetry are similarly straightforward to show.
\end{proof}

The partial order defined by behavior domination is similar to the one defined by Pareto-dominance in multiobjective optimization. Note that, even though they make use of a notion of Pareto-dominance, neither NSGA-NF nor NSLC have the property of a stable partial-ordering of solutions, because the novelty objective fluctuates as the population changes over time. On the other hand, the front induced by behavior domination can be viewed geometrically as a rotation of a Pareto front (Figure~\ref{FigRotation}).
Algorithms based on behavior domination can then more easily inherit properties from multiobjective optimization, e.g., guarantees that the non-dominated front dominates every point ever generated and all area dominated by any point ever generated, and guarantees regarding near-optimal distribution of non-dominated solutions \cite{laumanns02, deb16, coello07}. The practical expectation is that the utility of non-dominated solutions as stepping stones in multiobjective optimization will transfer to the case of behavior domination. An algorithm based on this connection to multiobjective optimization is introduced in Section~\ref{SecNewBDMA}. 

Although aggregation and domination are the most prevalent approaches to ranking, the definition of a behavior domination algorithm does not preclude the existence of other schemes that use a domination effect function.

\begin{definition}
Every algorithm whose ranking mechanism's dependence on $f$ and $b$ can be defined in terms of a domination effect function is a \emph{behavior domination algorithm (BDMA)}.
\end{definition}

Behavior domination algorithms can avoid ``spooky action at a distance'' (Section~\ref{SubSecSpooky}) by using a domination-based ranking scheme. When ranking decisions are only made with respect to the operator $\succeq$, moving a solution $y$ away from a non-dominated solution $x$ cannot cause $x$ to become dominated. For example, see the representation of MAP-elites in the next section (Observation~\ref{ObsMEBDMA}).

\subsection{BDMA Representation of Existing Algorithms} \label{SubSecExistingBDMAs}

The next three observations demonstrate how the behavioral domination framework can be used to represent existing algorithms. Such observations are helpful in clarifying the space of BDMAs.

\begin{observation}[Fitness-based search is a BDMA] \label{ObsFitBDMA}
Since fitness-based search does not make use of behavior, this can be achieved by setting $b$ to be the trivial behavior characterization, $b(x) = 0 \ \forall x$. Then, $\succeq$ (Definition~\ref{DefDomination}) induces the same total ordering as sorting fitness scores directly.
\end{observation}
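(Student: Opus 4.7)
The plan is to verify the observation essentially by unwinding the definitions. First I would fix the trivial behavior characterization $b(x) = 0$ for all $x$ (where $0$ denotes any constant vector in the behavior space). Since $d$ is a metric, reflexivity gives $d(b(x), b(y)) = d(0, 0) = 0$ for every pair of solutions.

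Next I would substitute into the domination effect: $e(x, y) = f(x) - f(y) - d(b(x), b(y)) = f(x) - f(y)$. By Definition~\ref{DefDomination}, $x \succeq y$ iff $e(x, y) \geq 0$ iff $f(x) \geq f(y)$. Because $f$ is a scalar function, the relation $\succeq$ is total and coincides with the usual ordering induced by fitness values. Therefore the ranking produced by sorting solutions according to $\succeq$ is precisely the ranking used by fitness-based search.

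Finally I would observe that the dependence of fitness-based search on $f$ and $b$ has now been written in terms of the domination effect $e$, so it meets the definition of a BDMA. The only conceptual point to highlight is that, although fitness-based search does not consult any behavior information, expressing it as a BDMA is a matter of exhibiting a behavior characterization for which the domination effect collapses to a pure fitness comparison; the trivial constant characterization accomplishes exactly this. There is no real obstacle here beyond making explicit that a metric evaluated at a single point must be zero, which is what kills the $d$ term in $e$.
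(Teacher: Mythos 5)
Your proposal is correct and follows exactly the route the paper intends: the observation's own justification is precisely that the constant characterization forces $d(b(x),b(y))=0$, collapsing $e(x,y)$ to $f(x)-f(y)$ so that $\succeq$ reproduces the fitness ordering. You have merely made explicit the (trivial) metric step $d(0,0)=0$ that the paper leaves implicit.
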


\begin{observation}[Novelty search is a BDMA]
This is another trivial case. Since novelty search does not make use of the fitness function, this is similarly achieved by choosing $f(x) = 0 \ \forall x$, and using the usual novelty search aggregation scoring for ranking solutions.
\end{observation}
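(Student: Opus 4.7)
The plan is to exhibit choices of $f$, $b$, and $d$ under which the novelty-search ranking mechanism's dependence on $f$ and $b$ can be rewritten entirely in terms of the domination effect function $e(x, y) = f(x) - f(y) - d(b(x), b(y))$, thereby verifying the definition of a BDMA. Since novelty search never consults a fitness value when ordering solutions, the natural move is to pick the trivial fitness $f(x) = 0$ for all $x$; the ranking is then trivially a function of this constant $f$, and $e$ collapses to $e(x, y) = -d(b(x), b(y))$.

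Next I would keep $b$ and $d$ as the behavior characterization and behavior distance already used by the novelty-search instantiation. The standard novelty score then rewrites as
$$n(x) = \frac{1}{k}\sum_{i=1}^{k} d(b(x), b(y_i)) = -\frac{1}{k}\sum_{i=1}^{k} e(x, y_i),$$
where $y_1, \ldots, y_k$ are the $k$ nearest neighbors of $x$ in the union of the current population and the archive. This is precisely the ranking-by-aggregation scheme highlighted in Section~\ref{SubSecBDF}: a single real-valued score built as a normalized sum of pairwise $e$-values. Ordering solutions by $n(x)$ is therefore a ranking mechanism whose dependence on $f$ and $b$ is fully mediated by $e$, which is all that the BDMA definition requires.

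The one detail I would want to address explicitly is the archive, since it might look like an extra source of $f$- or $b$-dependence lurking outside $e$. However, each candidate enters the archive with a fixed probability $p_{add}$ that references neither $f$ nor $b$, so the archive's construction is independent of both. Once the archive is fixed at a given generation, its only role in ranking is to enlarge the candidate pool of neighbors $y_i$ supplied to the aggregation above. Hence no hidden dependence on $f$ or $b$ slips past the domination-effect function, and the representation is complete. I expect this archive-independence observation, rather than the algebra of rewriting $n(x)$, to be the only step that requires any care.
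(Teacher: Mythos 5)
Your proposal is correct and takes essentially the same route as the paper: the paper's own justification is exactly the choice $f(x) = 0\ \forall x$ together with the standard novelty aggregation, which it treats as a trivial case and states without further elaboration. Your added details---rewriting $n(x) = -\frac{1}{k}\sum_i e(x, y_i)$ explicitly and noting that the $p_{add}$-based archive introduces no hidden dependence on $f$ or $b$---merely flesh out what the paper leaves implicit.
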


\begin{observation}[MAP-elites is a BDMA] \label{ObsMEBDMA}
Consider an instance of MAP-elites with fitness function $f$, behavior characterization $b_o$, and binning function $\beta$ that maps each behavior to its bin. Choose $b$ such that $b(x) = \beta(b_o(x))$, and define $d$ by
$$
d(b(x), b(y))=
\begin{cases}
0,& \text{if } \ b(x) =  b(y),\\
\infty,& \text{otherwise.}
\end{cases}
$$
Then, the non-dominated solutions under $\succeq$ are exactly the elites maintained by the original MAP-elites algorithm.
\end{observation}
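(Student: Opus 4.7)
The plan is to analyze the domination effect function $e(x,y) = f(x) - f(y) - d(b(x), b(y))$ under the proposed constructions of $b$ and $d$, and show that the resulting relation $\succeq$ singles out exactly the per-bin fitness maximizers that MAP-elites retains. First I would substitute $b(x) = \beta(b_o(x))$ and the two-valued metric $d$ directly into $e$, then case-split on whether $x$ and $y$ lie in a common bin.

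In the case $\beta(b_o(x)) = \beta(b_o(y))$, we have $d(b(x), b(y)) = 0$, so $e(x,y) = f(x) - f(y)$, giving $x \succeq y$ iff $f(x) \geq f(y)$. In the case $\beta(b_o(x)) \neq \beta(b_o(y))$, we have $d(b(x), b(y)) = \infty$, so $e(x,y) = -\infty < 0$, and hence $x \not\succeq y$. Combining the two cases yields the clean characterization: $x \succeq y$ iff $x$ and $y$ share a bin and $f(x) \geq f(y)$.

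With this characterization in hand, a solution $x$ is non-dominated under $\succeq$ iff no other solution $y$ in its bin has strictly higher fitness, i.e., iff $x$ attains the maximum fitness within its bin. By the definition of MAP-elites, these are precisely the elites stored by the algorithm, which establishes the observation. Before concluding, I would verify that $d$ is a legitimate behavior distance on the (discrete) space of bins: non-negativity, symmetry, identity of indiscernibles on bins, and the triangle inequality all follow immediately from the two-valued definition, so the proposed $b$ and $d$ indeed yield a well-formed BDMA in the sense of Section~\ref{SubSecBDF}.

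The only subtle point I anticipate is the treatment of ties: if two solutions in the same bin share the maximum fitness, both are non-dominated under $\succeq$, whereas MAP-elites is usually described as storing a single occupant per bin. I would address this by noting that either tie-broken choice is a valid elite and that MAP-elites implementations make an arbitrary selection (for instance, retaining the incumbent); the essential content of the observation, namely that the non-dominated front coincides with the set of per-bin fitness maximizers, is unaffected. The use of $\infty$ as a metric value can likewise be handled either by working in the extended reals, or by replacing $\infty$ with any constant larger than the total fitness range ever appearing in the population.
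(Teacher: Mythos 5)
Your argument is correct and is exactly the reasoning the paper intends: the observation is stated without an explicit proof, and your case split (same bin gives $e(x,y)=f(x)-f(y)$, different bins give $e(x,y)=-\infty$) is the natural justification of the asserted equivalence between the non-dominated front and the per-bin fitness maximizers. Your additional remarks on ties and on replacing $\infty$ by a sufficiently large constant are sensible refinements that go slightly beyond what the paper spells out but do not change the approach.
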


The above subsumptions demonstrate the breadth of the space of BDMAs. However, each of these representations avoids the natural geometric form of the domination effect function. Section~\ref{SecNewBDMA} develops an algorithm that follows more directly from Definition~\ref{DefDomination}.

\subsection{A non-dominated sorting BMDA: BDMA-2} \label{SecNewBDMA}

Given a fitness function $f$ and a behavior characterization $b$, here let the domination effect function be parameterized completely by the choice of behavior distance $d$. 
A new algorithm, BDMA-2, is defined with a scaled L2 distance metric:
$$ d(b(x), b(y))= w \cdot \lVert b(x) - b(y) \rVert_2 .$$
The inclusion of the scaling parameter $w$ is useful for flexibility in relating fitness and behavior distance numerically. Increasing $w$ increases the emphasis on novelty; decreasing it increases the emphasis on fitness. Figure~\ref{FigFourPeaks}
\begin{figure}
\includegraphics[width=0.9\columnwidth]{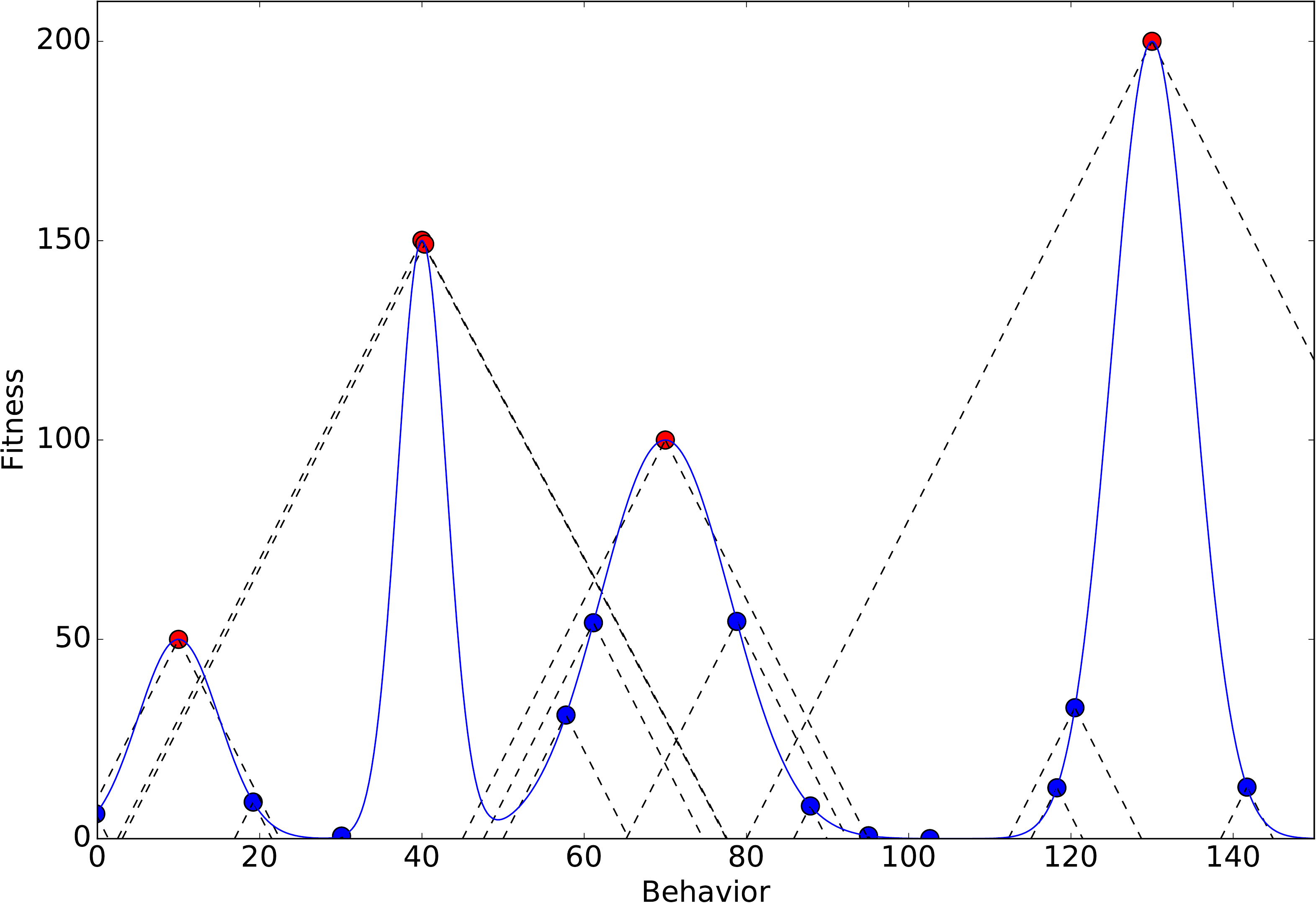}
\hspace{5pt}
\caption{A sample BDMA-2 population successfully maintaining solutions at each local maximum discovered in the four peaks domain (Section~\ref{SubSecMaintaingStones}). Dashed lines indicate the region each solution dominates for $w = 4$. The five solutions on the non-dominated front are in red, including two around the peak where $b(x) = 40$.} 
\label{FigFourPeaks}
\end{figure}
depicts an instance of a ranking step in BDMA-2, including the induced domination structure, taken from the experiments in Section~\ref{SubSecMaintaingStones}. 

Now that a suitable behavior distance is defined, a fast non-dominated sort (as in NSGA-II \cite{deb02}) is used to rank the solutions, based on the $\succeq$ operator induced by $d$.
In contrast to the distance function used by MAP-elites (Obs.~\ref{ObsMEBDMA}), the L2 distance allows the flexible discovery of the locations of an efficient set of stepping stones, opposed to having their bounded locations determined beforehand.
The expectation is that the success of the non-dominated front in NSGA-II in providing useful stepping stone for multiobjective optimization will transfer to this case of behavior domination.
Similar to a previous behavior-driven tie-breaking approach \cite{hodjat16}, ties are broken on the final front from which solutions must be kept by iteratively excluding the less fit of the two nearest solutions on that front, until the desired number of solutions remain. 

Specifying the number of top solutions to select via the fast non-dominated sort can be viewed as specifying the number of stepping stones wished to be maintained during search. 
To preserve the efficient exploration capabilities of novelty search while maintaining useful stepping stones, it is useful to have a subset of the population selected as stepping stones, and the remainder selected by novelty alone. 
Specifying the number of stepping stones in the population is an intuitive parameterization that can be informed by domain knowledge as well as time and space requirements.

On the other hand, it may take significant experimenter effort and domain knowledge to set an effective $w$. Conveniently, the definition of behavior domination can be used to develop a suitable scheme for automatically setting $w$ online during search. It is straightforward to encode rules so that $w$ is set to guarantee the domination or non-domination of some set of solutions considered harmful or desirable, respectively. In the experiments in this paper, an example of such an online adaptation scheme is considered, inspired by the avoidance of ``spooky action at a distance'' (Section~\ref{SubSecSpooky}). In this scheme, at every iteration $w$ is set at the maximal value such that neither of the two most distant solutions are dominated. This online adaptation scheme (BDMA-2a) is compared against setting a static $w$ in Section~\ref{SecExperiments}. Though it is an intuitive heuristic, setting $w$ online in this fashion does not necessarily preserve the guarantees of using a fixed domination effect function. Development of more grounded approaches to adapting $w$ is left to future work.

\section{Experimental Investigation} \label{SecExperiments}

Experiments were run in domains that extend limited capacity drift models, previously used to study novelty search \cite{lehman13, lehman15}, with fitness and a continuous solution space. Each solution is encoded by a vector with values in the range $[0, 150]$. The population is randomly initialized with all values in $[0, 1]$.
This abstraction captures the property of real world domains that often only a small portion of the behavior space can be reached by randomly generated solutions, e.g., robots that either spin in place or crash into the nearest wall; evolution must accumulate structure in its solutions to progress beyond this initial space. 
The first set of experiments tests the ability to discover and maintain available stepping stones; the second tests the ability to perform well in settings where effective use of stepping stones can accelerate evolutionary progress. 

The underlying evolutionary algorithm for each experimental setup is a steady-state algorithm with Gaussian mutation and uniform crossover. The only difference between setups in a domain is the method of ranking solutions. 
See Appendix for experimental parameter settings. 
In each domain, the performance measures for each algorithm were averaged over ten runs.

\subsection{Discovering and Maintaining Stepping Stones} \label{SubSecMaintaingStones}
The first domain has a one-dimensional solution space.
The fitness landscape has four peaks of differing heights, with the rightmost peak being the highest (Figure~\ref{FigFourPeaks}).
The behavior characterization is the identity function, i.e., $b(x) = x$.
Each peak represents a potentially useful stepping stone, with the higher peaks having more potential. In an optimal state, a population will include solutions near the tops of each peak. This domain tests an algorithm's ability to grow its solutions to successfully discover each peak while maintaining in the active population potentially useful stepping stones encountered along the way.

Consider four bins in the behavior space, each of width 10 and centered around a peak. Each algorithm is evaluated against two MAP-elites-based measures \cite{mouret15, pugh15}. The first is the sum of the top fitnesses ever achieved across the bins; this measures an algorithm's ability to discover stepping stones. The second is the sum of the top fitnesses of these bins in the current population; this measures an algorithm's ability to maintain stepping stones. The results are depicted in Figure~\ref{FigOneDResults}. 
\begin{figure}
\centering
\includegraphics[width=0.9\columnwidth]{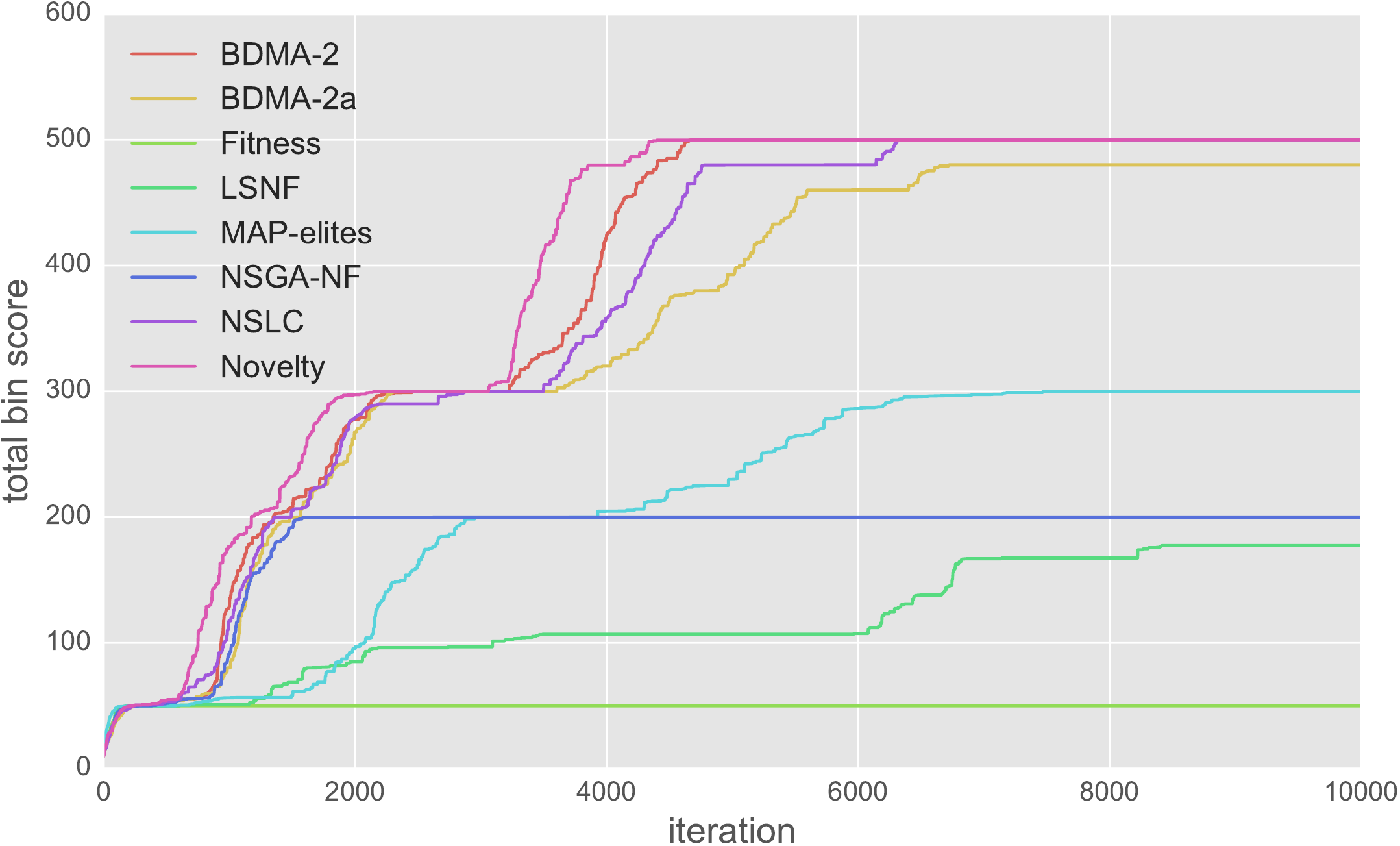}
\includegraphics[width=0.9\columnwidth]{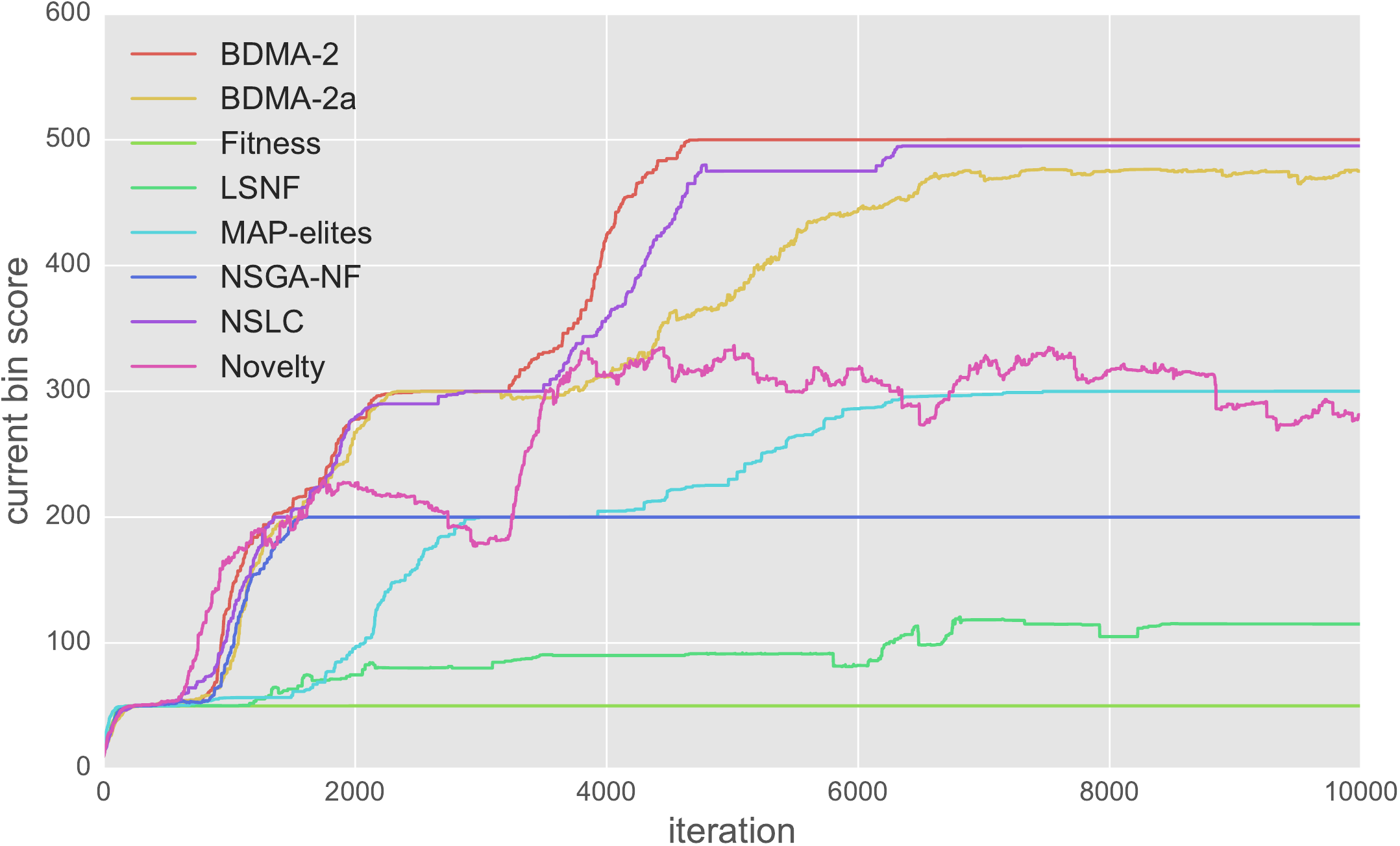}
\caption{Four peaks domain results. A total (current) bin score near 500 indicates all stepping stones are discovered (maintained). (top) Novelty search discovers all the peaks most quickly, but BMDA-2 does not take much longer; (bottom) Only BDMA-2, NSLC, and BDMA-2a consistently maintain solutions near each discovered peak across the ten trials.
\vspace{-10pt}
\label{FigOneDResults}}
\end{figure}
As expected, novelty search is able to discover the available stepping stones most quickly, since it's focused only on exploration. However, BMDA-2 is not far behind, followed by NSLC and BDMA-2a. When it comes to maintaining these stepping stones, BDMA-2 outperforms the other algorithms, again followed closely by NSLC and BDMA-2a. Note that although MAP-elites maintains the elites in each visited bin, when the bin size is large it is difficult to jump to new bins, and when it is small the chance of selecting an elite on the edge as a parent is small. So, MAP-elites explores slowly in this domain (results shown with bin size 1).

Figure~\ref{FigAdaptW} shows examples of values of $w$ adapted over the course of BDMA-2a runs.
\begin{figure}
\includegraphics[width=0.9\columnwidth]{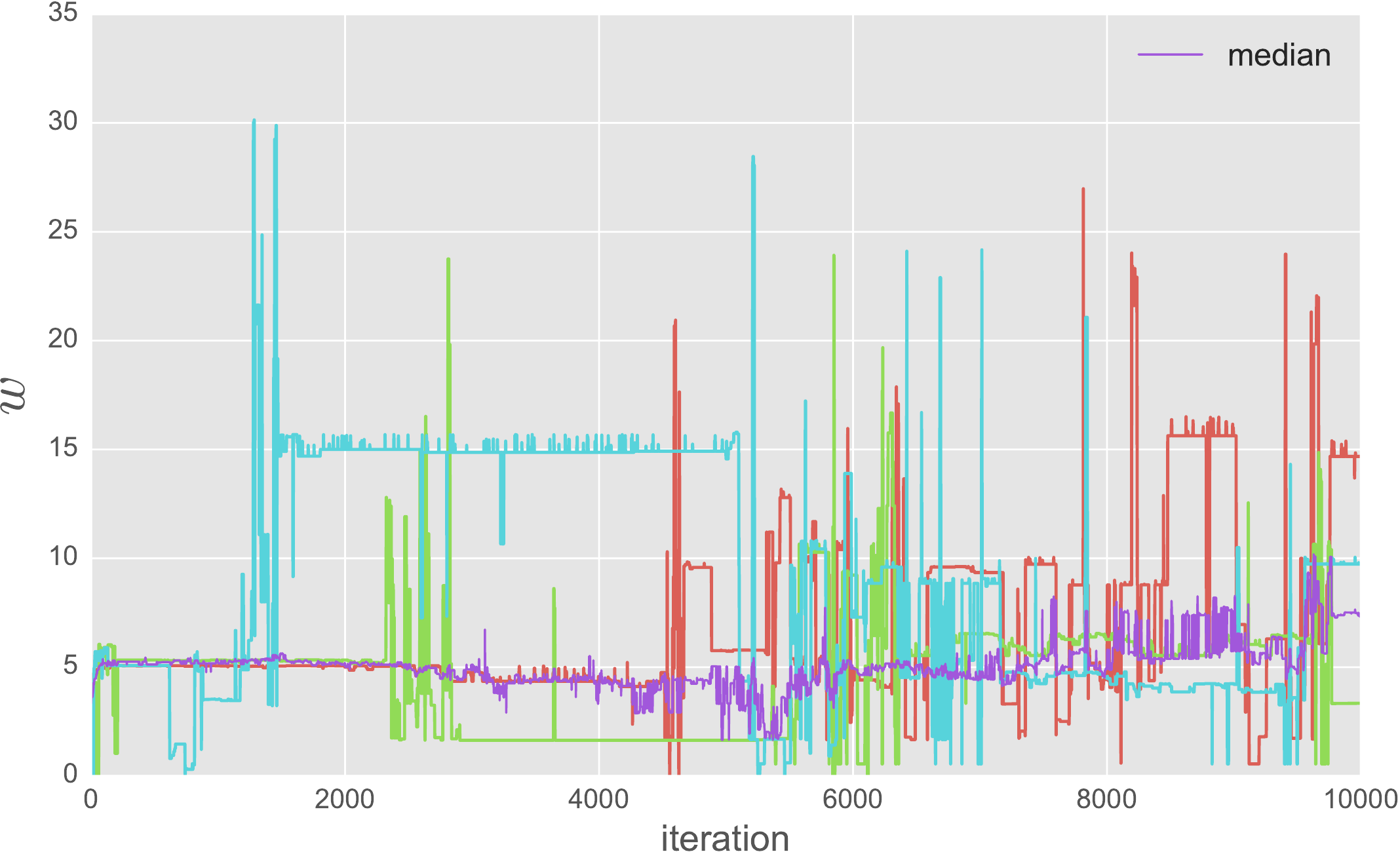}
\caption{Adapted value of $w$ over time for three independent runs of BDMA-2a (Section~\ref{SecNewBDMA}) in the four peaks domain, along with the median $w$ over all 10 runs. Adaptation of $w$ is marked by periods of relative stability followed by periods of relative instability.}
\label{FigAdaptW}
\end{figure}
Future schemes for adapting $w$ may try to minimize fluctuations for better predictability (Section~\ref{SecDiscussionAndFutureWork}).

\subsection{Harnessing Stepping Stones} \label{SubSecHarnessingStones}

The most successful algorithms at discovering and maintaining stepping stones (NSLC, BDMA-2, and BDMA-2a), along with Novelty and Fitness as controls, were evaluated in two further domains, which test the abilities of algorithms to exploit available stepping stones by focusing on the most promising areas of the search space.

\subsubsection{Exponential Focus (ETF) Domain} \label{SubSubSecETF}

The ETF domain captures the notion that real world domains contain complementary stepping stones, which, if harnessed successfully, can accelerate progress in a way not possible otherwise. 
This domain has a two-dimensional solution space, and the fitness function contains stepping stones that can enable exponential progress if used effectively.

The fitness landscape consists of a series of claw-like regions that increase in size and value as they get farther away from the origin; all other areas have fitness zero (Figure~\ref{FigETFResults} (top)).
\begin{figure}
\includegraphics[width=0.9\columnwidth]{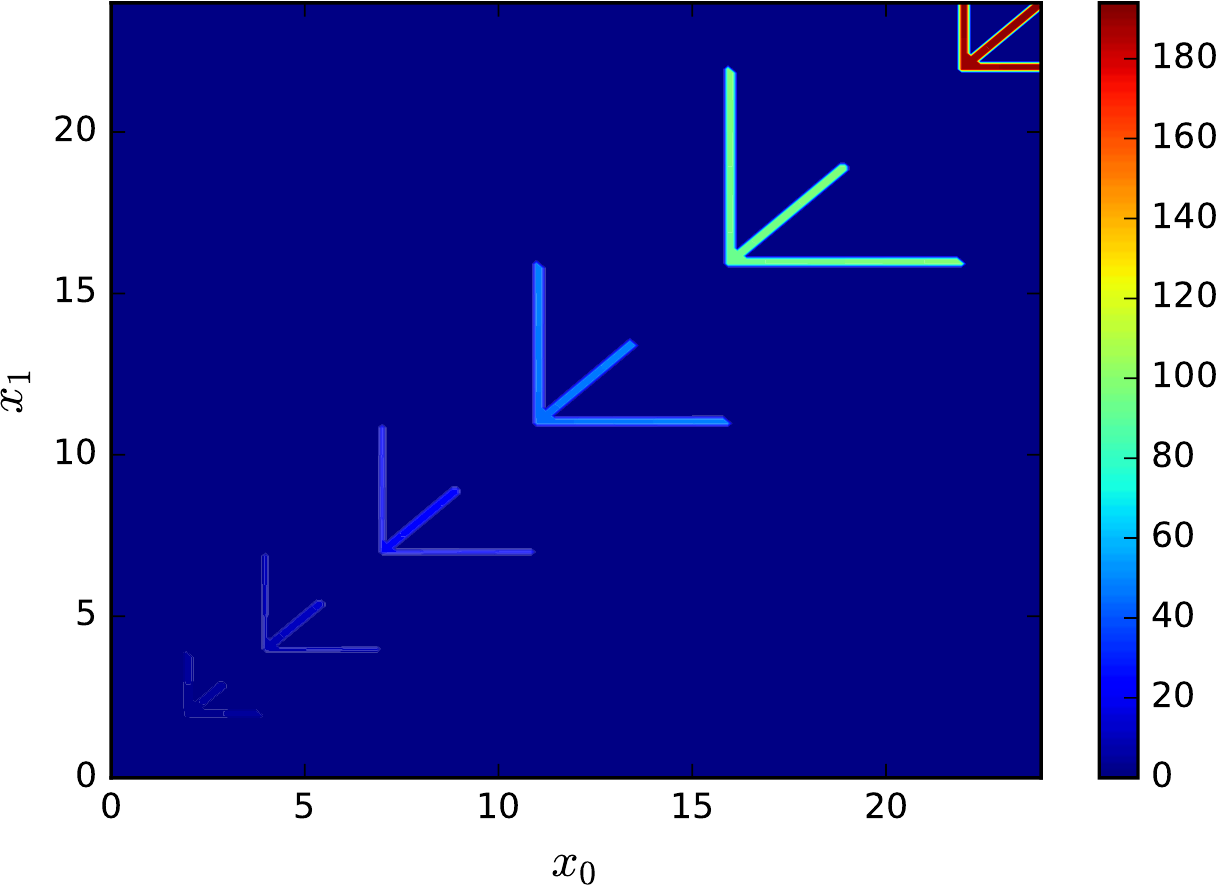}
\\ \vspace{10pt}
\includegraphics[width=0.9\columnwidth]{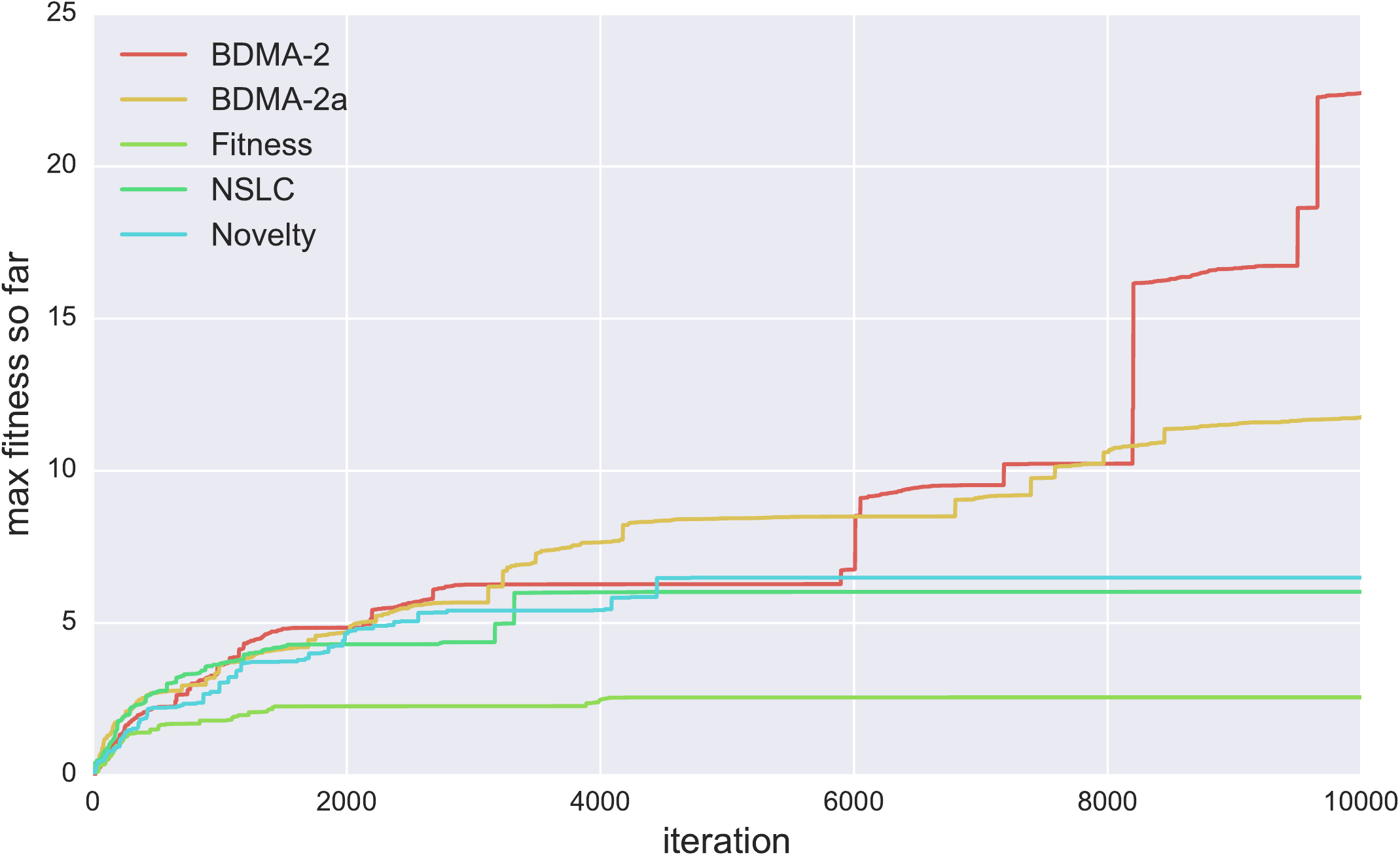}
\caption{(top) The ETF domain contains a series of claw-like regions.
Each region supports two stepping stones that can be combined to reach the next higher-valued region via crossover. 
This domain tests the ability to harness these stepping stones; 
(bottom) Results in the ETF domain with $s = 100$. BDMA-2 is the most successful, followed by BDMA-2a. \label{FigETFResults}}
\end{figure} 
The heel of the first claw is located at $(1, 1)$ and has fitness 1. 
The $i^{th}$ claw has a heel with fitness $h$, and three toes, each of width $\epsilon = 0.2$.
Fitness increases linearly along each toe. 
The tip of the vertical and horizontal toes have fitness $h + i$, and the tip of the diagonal toe has fitness $h + 2i$.
The heel of the $(i + 1)^{st}$ claw has fitness $2(h + i)$, and can be reached by a successful crossover of the $i^{th}$ vertical and horizontal toes. 
Thus, an algorithm can reach the next claw by maintaining solutions on the tips of both horizontal and vertical toes, while avoiding convergence to the deceptive diagonal toe. 

The behavior characterization is $b([x_0, x_1]) = s \cdot x_0 + x_1$, i.e., $s$  controls how much the first dimension of the behavior space is stretched. 
As $s$ increases, it is more costly for an algorithm to densely explore the entire behavior space. 
Experiments were run with $s = 100$, $s = 1000$, and $s = 10000$. 

Since the purpose of this domain is to evaluate how well an algorithm can use stepping stones to discover high-performing solutions, algorithms are compared based on their maximum fitness achieved by iteration. 
Results are shown in Figure~\ref{FigETFResults} (bottom) and Table~\ref{TableResults} (a). 
\begin{table}
\vspace{25pt}
{\footnotesize
\begin{tabular}{| c | c | c | c | c | c |}
\hline
$s$ & Fitness & Novelty & NSLC & BDMA-2 & BDMA-2a \\ \hline 
$100$ & 2.55 (0.28) & 6.49 (0.77) & 6.02 (1.29) & \textbf{22.41} (5.32) & 11.76 (0.85) \\ \hline
$1000$ & 2.55 (0.28) & 9.59 (1.74) & 6.31 (1.26) & \textbf{14.79} (2.63) & 14.16 (1.33) \\ \hline
$10000$ & 2.55 (0.28) & 9.36 (1.68) & 6.13 (0.98) & 9.57 (2.04) & \textbf{15.68} (1.71) \\ \hline
\end{tabular}
}
\\ \vspace{5pt} (a) Mean max fitness (std. err.) in the ETF domain. \vspace{15pt} \\
{\footnotesize
\begin{tabular}{| c | c | c | c | c | c |}
\hline
$D$ & Fitness & Novelty & NSLC & BDMA-2 & BDMA-2a \\ \hline
10 & 2.708 (0.00) & 2.846 (0.09) & 2.823 (0.05) & \textbf{3.023} (0.09) & 3.010 (0.10) \\ \hline
20 & 2.708 (0.00) & 2.678 (0.01) & 2.748 (0.02) & \textbf{2.898} (0.05) & 2.791 (0.05) \\ \hline
30 & 2.708 (0.00) & 2.682 (0.01) & 2.705 (0.00) & \textbf{2.791} (0.02) & 2.711 (0.02) \\ \hline
\end{tabular}
}
\\ \vspace{5pt} (b) Mean max fitness (std. err.) in the focused Ackley domain. \vspace{5pt} \\
\vspace{15pt}
\caption{Max fitnesses achieved through 10,000 iterations, averaged across 10 runs. (a) Results in the ETF domain. Both BDMA-2 and BDMA-2a outperform the other approaches across all scales of $s$. BDMA-2's performance decreases with $s$, while BDMA-2a's increases, showing its ability to successfully adapt $w$ with this type of scaling; (b) Results in the focused Ackley domain. BMDA-2 and BDMA-2a outperform the other algorithms across all scales of $D$.  \label{TableResults}}
\end{table}
BDMA-2a significantly outperforms each existing algorithm for each value of $s$ (Mann Whitney U Test, $p < 0.01$), with BDMA-2 showing dramatic improvements as well.

\subsubsection{Focused Ackley Domain}

The results in the ETF domain demonstrate that BDMA-2 can be successful in domains that contain natural stepping stones.
To further validate this idea, experiments were run in a domain based on the popular Ackley benchmark function \cite{ackley87, back97}, which also has an inherent stepping stone structure. 
The search space is $D$-dimensional. If a solution $x$ falls in a bounded region, defined by $\lvert x_0 - x_1 \rvert < 2$ and $\sum_{i = 2}^D x_i < D / 2$, its fitness is the value of the Ackley function at $[x_0, x_1]$, otherwise, its fitness is drawn randomly from $[0,1]$ (Figure~\ref{FigAckley} (top)).
\begin{figure}
\includegraphics[width=0.9\columnwidth]{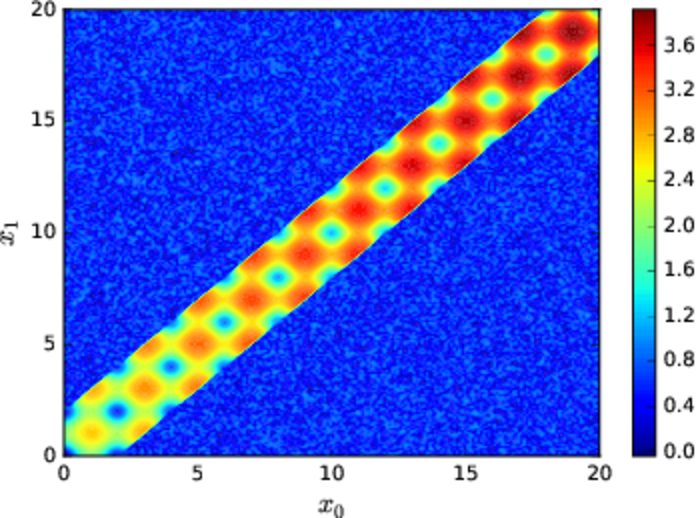}
\\ \vspace{10pt}
\includegraphics[width=0.9\columnwidth]{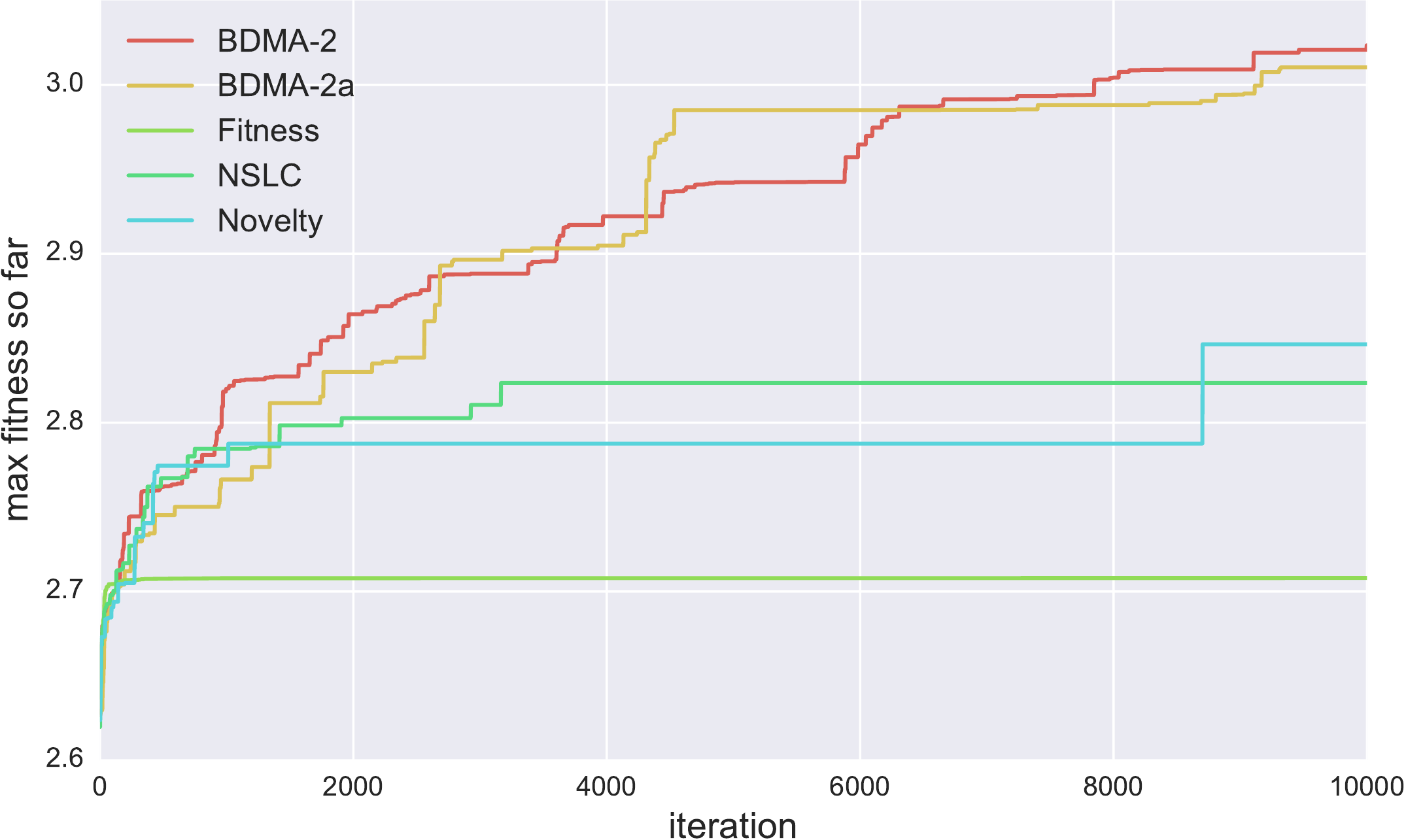}
\caption{
(top) The focused Ackley domain tests an algorithm's ability to focus on useful stepping stones, which here are local maxima bordering noisy regions in a high-dimensional behavior space.
(bottom) Results in the focused Ackley domain with $D = 10$. BDMA-2 and BDMA-2a consistently outperform the other approaches.}
\label{FigAckley}
\end{figure}
In this domain, $b(x) = x$, and scale is controlled by the number of dimensions $D$ of the behavior space.
The noise outside of the bounded region is a challenge for algorithms that must decide which regions are worth exploring.
The results in Figure~\ref{FigAckley} (bottom) and Table~\ref{TableResults} (b) show how BDMA-2 and BMDA-2a improve upon existing approaches. BDMA-2 significantly outperforms each existing algorithm for each value of $D$ (Mann Whitney U Test, $p < 0.02$), except for Fitness with $D = 30$, as each approach that makes use of the behavior characterization $b$ is negatively affected by increases in the dimensionality of $b$. 

Still, the success of BDMA-2 and BDMA-2a in these domains that contain useful stepping stones is encouraging evidence for the potential to scale behavior domination algorithms to more complex domains, where it is assumed that such stepping stones exist.

\section{Discussion and Future Work} \label{SecDiscussionAndFutureWork}

The existing algorithms classified under behavior domination (Section~\ref{SubSecExistingBDMAs}) have been validated across an array of complex domains \cite{back97, lehman10b, lehman11a, mouret15, mouret15a}. 
The experiments in Section~\ref{SecExperiments} demonstrate that the behavior domination framework can lead to progress over existing approaches on problems that contain useful stepping stones, and it will be interesting to see what new methods will be required to scale these methods to the real world, where stepping stones abound, e.g., in domains such as robot control \cite{lehman11a, mouret15, mouret12} and automatic content generation \cite{lehman11b, nguyen16, lehman16}.

Effective specification of behavior is still an issue. Experiments in the ETF domain (Section~\ref{SubSubSecETF}) showed how behavior-driven algorithms can be sensitive even to linear scaling of the behavior space. Although BDMA-2 and BDMA-2a outperformed the other approaches in this scenario, their reliance on a single parameter $w$ across all behavior dimensions makes them susceptible to such issues. From the perspective of behavior domination, solutions to these issues can be hidden in the behavior characterization, i.e., by letting $b$ be some transformation of the raw behavior characterization. Automatically specifying behavior characterizations in a robust and general way is an open problem, and some recent work has begun to make progress in this direction \cite{meyerson16, nguyen16, liapis13, gomes14}.

Given a reasonable behavior characterization, one method of setting $w$ automatically was presented in Section~\ref{SecNewBDMA}, but there are many methods that could be tried, some of which may be more generally effective, and preserve stability properties of the behavior domination front. Overall, more work can be done to transfer guarantees from the theory of multiobjective optimization \cite{deb16, coello07}, which will also lead to practical algorithmic improvements.

Although transferring theoretical properties can be satisfying, further work is needed to understand where theoretical focus in behavior-driven search will yield the biggest practical impact. The issue of ``spooky action at a distance'' (Section~\ref{SubSecSpooky}) identifies some unsettling dynamics in existing algorithms, but it is not clear whether it strikes at the heart of the matter, or is merely a shadow of something more illusive. Further work must be done to fully characterize the emergent dynamics of ranking procedures, in parallel with work to understand how careful specification of a behavior characterization and fitness function can guarantee the existence of useful stepping stones in the joint behavior-fitness space.

\section{Conclusion}

The goal of this study was to understand and harness the ability of evolution to discover useful stepping stones.
Existing behavior-driven algorithms have properties that interfere with this goal; the behavior domination framework was introduced to reason formally about how these properties could be avoided.
A new algorithm, BDMA-2, was introduced based on this framework, and shown to improve over existing behavior-driven algorithms in domains that contain useful stepping stones.
The behavior domination perspective is thus a promising tool for comparing and understanding existing behavior-driven algorithms as well as for designing better ones in the future.

\bibliographystyle{ACM-Reference-Format}
\bibliography{meyerson}

\appendix
\vspace{10pt}
\noindent\textbf{Appendix of Experimental Parameters}
\vspace{2pt}
\footnotesize

\vspace{2pt}\noindent\emph{BDMA-2 params.:}
$w$ by domain: Four peaks: $w = 16$; ETF and Ackley: $w = 0.005$, $w = 0.0005$, $w = 0.00005$, for $s (D) = 100 (10), 1000 (20), 10000 (30)$, resp. Proportion of population selected by novelty alone: 0.5.

\vspace{2pt}\noindent\emph{Underlying algorithm:} 
pop. size = 20; 
solutions generated per iteration = 1; 
crossover probability = 1; 
mutation $\sigma$ by domain: $\sigma = 1$, $\sigma = 0.1$, $\sigma = 0.25$.

\vspace{2pt}\noindent\emph{Novelty params.:}
$k = 5$;
$p_{add} = 0.01$ (BDMA-2 has no external archive).

\vspace{2pt}\noindent\emph{Four peaks fitness function:} $f(x) = 50 \cdot g(x, 10,5) + 150 \cdot g(x, 40,3) + 100 \cdot g(x, 70, 8) + 200 \cdot g(x, 130,5)$, where $g(x, \mu, \sigma) = \text{exp}[-(x - \mu)^2/(2\sigma^2)]$.

\vspace{2pt}\noindent\emph{Ackley function parameterization:} $a = 500$, $b = 0.0005$, $c = \pi$.

\end{document}